\documentclass[letterpaper, 10 pt, journal, twoside]{IEEEtran}


\usepackage{amsmath,amsfonts}
\usepackage{array}
\usepackage[caption=false,font=normalsize,labelfont=sf,textfont=sf]{subfig}
\usepackage{textcomp}
\usepackage{stfloats}
\usepackage{url}
\usepackage{verbatim}
\usepackage{graphicx}
\hyphenation{op-tical net-works semi-conduc-tor IEEE-Xplore}
\def\BibTeX{{\rm B\kern-.05em{\sc i\kern-.025em b}\kern-.08em
    T\kern-.1667em\lower.7ex\hbox{E}\kern-.125emX}}
\usepackage{balance}

\usepackage{caption}
\captionsetup[figure]{labelfont={small},textfont={small}}
\captionsetup[subfloat]{labelfont={small},textfont={small},
subrefformat=parens}

\usepackage{multirow}
\usepackage{hhline}

\usepackage{cite}

\usepackage{algorithm2e}
\RestyleAlgo{ruled}


\usepackage{amsthm}
\makeatletter
\def\@endtheorem{\endtrivlist}
\makeatother
\newtheorem{theorem}{Theorem}

\newtheorem{lemma}{Lemma}
\usepackage{amssymb}


\begin{document}

\markboth{IEEE Robotics and Automation Letters. Preprint Version. Accepted June, 2022}{Kim an Oh: Efficient Off-Policy Safe Reinforcement Learning
Using Trust Region Conditional Value at Risk} 

\title{Efficient Off-Policy Safe Reinforcement Learning \\
Using Trust Region Conditional Value at Risk}

\author{Dohyeong Kim and Songhwai Oh 
\thanks{Manuscript received: February, 24, 2022; Revised May, 27, 2022; Accepted May, 31, 2022.}
\thanks{This paper was recommended for publication by Editor Jens Kober upon evaluation of the Associate Editor and Reviewers' comments.
This work was supported in part by the Institute of Information \& Communications Technology Planning \& Evaluation under Grant 2019-0-01190, [SW Star Lab] Robot Learning: Efficient, Safe, and Socially-Acceptable Machine Learning, and in part by the National Research Foundation under Grant NRF-2022R1A2C2008239, both funded by the Korea government (MSIT).
\textit{(Corresponding authors: Songhwai Oh.)}}
\thanks{D. Kim and S. Oh are with the Department of Electrical and Computer Engineering and ASRI, Seoul National University, Seoul 08826, Korea (e-mail: dohyeong.kim@rllab.snu.ac.kr, songhwai@snu.ac.kr).}
\thanks{Digital Object Identifier (DOI): 10.1109/LRA.2022.3184793}
\thanks{\copyright 2022 IEEE.  Personal use of this material is permitted.  Permission from IEEE must be obtained for all other uses, in any current or future media, including reprinting/republishing this material for advertising or promotional purposes, creating new collective works, for resale or redistribution to servers or lists, or reuse of any copyrighted component of this work in other works.}
}

\maketitle
\begin{abstract}
This paper aims to solve a safe reinforcement learning (RL) problem with risk measure-based constraints.
As risk measures, such as conditional value at risk (CVaR), focus on the tail distribution of cost signals, constraining risk measures can effectively prevent a failure in the worst case.
An on-policy safe RL method, called TRC, deals with a CVaR-constrained RL problem using a trust region method and can generate policies with almost zero constraint violations with high returns. 
However, to achieve outstanding performance in complex environments and satisfy safety constraints quickly, RL methods are required to be sample efficient.
To this end, we propose an off-policy safe RL method with CVaR constraints, called off-policy TRC.
If off-policy data from replay buffers is directly used to train TRC, the estimation error caused by the distributional shift results in performance degradation.
To resolve this issue, we propose novel surrogate functions, in which the effect of the distributional shift can be reduced, and introduce an adaptive trust-region constraint to ensure a policy not to deviate far from replay buffers.
The proposed method has been evaluated in simulation and real-world environments and satisfied safety constraints within a few steps while achieving high returns even in complex robotic tasks.
\end{abstract}

\begin{IEEEkeywords}
Reinforcement learning, robot safety, collision avoidance.
\end{IEEEkeywords}

\section{Introduction}
\IEEEPARstart{S}{afe} reinforcement learning (RL) addresses the problem of maximizing returns while satisfying safety constraints, so it has shown attractive results in a number of safety-critical robotic applications, such as locomotion for legged robots \cite{gangapurwala2020gcpo, bharadhwaj2021conservative} and safe robot navigation \cite{kim2022trc}.
Traditional RL methods prevent agents from acting undesirably through reward shaping, which is known as a time-consuming and laborious task.
In addition, the reward shaping can enlarge the optimality gap since reward shaping can be considered as fixing Lagrange multipliers in constrained optimization problems \cite{tessler2018reward}.
In contrast, safe RL methods \cite{kim2022trc, achiam2017constrained, xu2021crpo, yang2021wcsac} can achieve better safety performance than traditional RL methods because they directly solve optimization problems with explicitly defined safety constraints.

Safety constraints can be defined by various types of measures about safety signals, and several literatures \cite{chow2017risk, kim2022trc, yang2021wcsac} have shown that risk measures such as conditional value at risk (CVaR) can effectively reduce the likelihood of constraint violations.
Yang \textit{et al.} \cite{yang2021wcsac} have proposed a CVaR-constrained RL method using a soft actor-critic (SAC) \cite{haarnoja2018sac}, called worst-case SAC (WCSAC).
WCSAC utilizes aplenty of data from replay buffers to update a policy and deals with the constraint using the Lagrangian method.
However, WCSAC generates an overly conservative policy if constraints are excessively violated during the early training phase due to Lagrange multipliers (see the experiment section in \cite{kim2022trc}).
As several studies claim that the Lagrangian method makes training unstable \cite{xu2021crpo, liu2022constrained, Zhang2020firstcpo},
Kim and Oh \cite{kim2022trc} have proposed a trust region-based safe RL method for CVaR constraints called TRC, which shows the state-of-the-art performance in both simulations and sim-to-real experiments.
However, since TRC can only use on-policy trajectories for policy improvements, it has low sample efficiency.
Hence, there is a need for an efficient safe RL method which can take the advantage of the sample efficiency of off-policy algorithms.

In this paper, we propose a TRC-based off-policy safe RL method, called \textit{off-policy TRC}.
To update a policy using a trust-region method, including TRC, it is required to predict the constraint value of a random policy using collected data.
However, if using off-policy data for on-policy algorithms, prediction errors can be increased due to the distributional shift, and trained policies can breach constraints as demonstrated in Section \ref{sec:effectiveness of surrogate functions}.
To leverage off-policy trajectories for TRC without the distributional shift, we formulate novel surrogate functions motivated by a TRPO-based method in \cite{meng2021offtrpo}.
Then, we show how the upper bound of CVaR with off-policy data from a replay buffer can be estimated using the proposed surrogate functions under the assumption that the cumulative safety signals follow the Gaussian distribution.
Additionally, an adaptive trust-region constraint is derived, which indirectly lowers the estimation error by ensuring that the state distribution of the current policy does not deviate from the state distribution in the replay buffer.
By iteratively maximizing the lower bound of the objective while constraining the upper bound of CVaR in the trust region, it is possible to ensure a monotonic improvement of the objective while satisfying the constraint.
With various experiments in MuJoCo \cite{todorov2012mujoco}, Safety Gym \cite{ray2019benchmarking}, and real-robot environments, off-policy TRC shows excellent sample efficiency with high returns compared to previous methods.

Our main contributions are threefold.
First, we formulate the surrogate functions which leverage off-policy trajectories, and derive the upper bound of CVaR using the surrogate functions under the Gaussian assumption.
Second, we propose a practical algorithm for CVaR-constrained RL in off-policy manners with adaptive trust region, called off-policy TRC.
Finally, off-policy TRC is evaluated in a number of experiments both in simulations and real environments and shows outstanding performance as well as the lowest total number of constraint violations while significantly improving sample efficiency.

\section{RELATED WORK}

\subsection{Risk Measure-Constrained RL}

Safe RL methods generally set safety constraints using the expectation of the cumulative costs.
If expectation-based constraints are used, the Bellman operator for the traditional Markov decision process (MDP) can be applied to the safety critic, giving mathematical simplicity.
However, since these constraints focus only on the average case, it is challenging to prevent failures which occurred in the worst case.
For this reason, research on risk measure-based constraints is increasing to concentrate on the worst case.
Kim and Oh \cite{kim2022trc} and Yang \textit{et al.} \cite{yang2021wcsac} proposed methods constraining CVaR, which is a risk measure widely used in financial investment \cite{rockafellar2000optimization}, and estimated the CVaR using distributional safety critics.
Both methods resulted in fewer constraint violations than the expectation-constrained RL methods, and used the trust-region and Lagrangian methods to deal with constraints, respectively.
Methods proposed by Ying \textit{et al.} \cite{ying2021cppo} and Chow \textit{et al.} \cite{chow2017risk} also use the CVaR-based constraints but estimate CVaR using the sampling method proposed by Rockafellar \textit{et al.} \cite{rockafellar2000optimization}.
However, the estimation error can increase when the state space is high-dimensional because of the nature of sampling methods.
Instead of CVaR, methods proposed by Thananjeyan \textit{et al.} \cite{thananjeyan2021recovery} and Bharadhwaj \textit{et al.} \cite{bharadhwaj2021conservative} use chance constraints restricting the likelihood that the cumulative costs become above a specific value.
These methods calculate the violation likelihood and correct the action to lower the likelihood at every environmental interaction, which can cause a longer action execution time.

\subsection{Off-Policy Safe RL}

The usage of off-policy data not only improves the sample efficiency, but also reduces the optimality gap to reach higher final performance.\footnote{Since the MuJoCo tasks \cite{todorov2012mujoco} are dominated by off-policy algorithms as shown in the benchmarks of the Spinning-Up from OpenAI \cite{openai2018spinningup}, using off-policy data can reduce the optimality gap.}
Therefore, it is crucial to use off-policy data in safe RL to rapidly satisfy the constraints and maximize the return.
Wang \textit{et al.} \cite{wang2020csac} and Ha \textit{et al.} \cite{ha2021lsac} proposed safe RL methods based on the SAC method.
These methods use off-policy data to estimate the safety critic and utilize the Lagrangian approach to reflect the expectation-based constraints.
There is another method based on the Q-learning approach proposed by Huang \textit{et al.} \cite{huang2022multiobjective}, which deals with the safe RL problem with multiple objectives.
Liu \textit{et al.} \cite{liu2022constrained} transform the safe RL problem into a variational inference problem and proposes a new safe RL approach based on the expectation-maximization (EM) algorithm.
This method uses off-policy data in the E-step to fit a non-parametric distribution, and updates a policy in the M-step by minimizing the distance between the policy and the non-parametric distribution.

\subsection{Other Safe RL}
In addition to the methods mentioned above, there are various approaches to safe RL.
First, there are Lyapunov-based methods to train a safe policy \cite{chow2018lyapunov, sikchi2021lyapunov}. 
They learn the Lyapunov function using auxiliary costs and update policies to satisfy the Lyapunov property.
In \cite{zhao2021modelfree, Luo2021barrier}, control barrier functions are used to prevent agents from entering unsafe regions, and the control barrier functions can be constructed using trained transition models or prior knowledge of systems.
Also, there are projection-based methods that handle constraints through a primal approach \cite{xu2021crpo, Yang2020projection}.
They update policies to maximize the sum of reward and project the update direction to a safe policy set.

\section{BACKGROUND}

\subsection{Safe Reinforcement Learning}

We use constrained Markov decision processes (CMDPs) to define a safe reinforcement learning (RL) problem.
A CMDP is defined with a state space $\mathcal{S}$, an action space $\mathcal{A}$, a transition model $\mathcal{P}:\mathcal{S} \times \mathcal{A} \times \mathcal{S} \mapsto \mathbb{R}$, an initial state distribution $\rho$, a discount factor $\gamma$, a reward function $R:\mathcal{S}\times\mathcal{A}\times\mathcal{S}\mapsto\mathbb{R}$, a cost function $C:\mathcal{S}\times\mathcal{A}\times\mathcal{S}\mapsto\mathbb{R}_{\geq0}$, and a safety measure $\mathbf{S}$.
Given a policy $\pi$, value and advantage functions are defined as follows:
\begin{equation}
\label{eq:value and advantage}
\small
\begin{aligned}
V^{\pi}(s)&:=\underset{\pi,\mathcal{P}}{\mathbb{E}}\left[\sum_{t=0}^{\infty}\gamma^t R(s_t,a_t,s_{t+1})|s_0=s\right], \\
Q^{\pi}(s,a)&:=\underset{\pi,\mathcal{P}}{\mathbb{E}}\left[\sum_{t=0}^{\infty}\gamma^t R(s_t,a_t,s_{t+1})|s_0=s, a_0=a\right], \\
A^{\pi}(s,a) &:= Q^{\pi}(s,a) - V^{\pi}(s).
\end{aligned}
\end{equation}
The cost value $V_C^{\pi}$ and advantage $A_C^{\pi}$ can be defined by replacing the reward $R$ with the cost $C$ in (\ref{eq:value and advantage}).
Additionally, a discounted state distribution is defined as $d^{\pi}(s):=(1-\gamma)\sum_{t=0}^{\infty}\gamma^{t}P(s_t=s|\rho, \pi)$.
For safety constraints, we introduce a notion called the \textit{cost return}, which is used in \cite{kim2022trc, yang2021wcsac}:
\begin{equation}
\label{eq:cost return}
\small
\begin{aligned}
C_{\pi}=\sum_{t=0}^{\infty}\gamma^t C(s_t, a_t, s_{t+1}),
\end{aligned}
\end{equation}
where $s_0 \sim \rho$, $a_t \sim \pi(\cdot|s_t)$, and $s_{t+1}\sim \mathcal{P}(\cdot|s_t, a_t)$.
Then, we can define a safe RL problem as follows:
\begin{equation}
\label{eq:safe RL problem}
\small
\begin{aligned}
\underset{\pi}{\mathrm{maximize}}\underset{s\sim\rho}{\mathbb{E}}\left[V^{\pi}(s)\right] \;\; \mathbf{s.t.} \; \mathbf{S}(C_{\pi}) \leq d,
\end{aligned}
\end{equation}
where $d$ is a limit value, and the objective function is denoted by $J(\pi):=\underset{s\sim\rho}{\mathbb{E}}\left[V^{\pi}(s)\right]$.

\subsection{Conditional Value at Risk for Trust Region Method}

Kim and Oh \cite{kim2022trc} proposed a trust region-base safe RL method for conditional value at risk (CVaR) constraints, called \textit{TRC}.
CVaR is one of the commonly used risk measures in financial risk management and formulated as follows \cite{rockafellar2000optimization}:
\begin{equation}
\label{eq:true CVaR}
\small
\begin{aligned}
\mathrm{CVaR}_{\alpha}(X) = \underset{\nu}{\mathrm{min}}\left(\nu + \frac{1}{1 - \alpha}\mathbb{E}\left[(X - \nu)_{+}\right]\right),
\end{aligned}
\end{equation}
where $(\cdot)_+$ is a clipping function that truncates values below zero.
Because CVaR is calculated by conditional expectations above a certain level, it focuses on the worst case rather than the average case.
Therefore, constraining CVaR is a compelling way to prevent a worst-case failure.
By using CVaR for safety measures, TRC outperforms other expectation-constrained RL methods.
To formulate CVaR constraints, the followings are defined in \cite{kim2022trc}.
\begin{equation}
\small
\label{eq:square value}
\begin{aligned}
S_C^{\pi}(s)&:=\underset{\pi,\mathcal{P}}{\mathbb{E}}\left[C_{\pi}^2|s_0=s\right], \\ S_C^{\pi}(s, a)&:=\underset{\pi,\mathcal{P}}{\mathbb{E}}\left[C_{\pi}^2|s_0=s, a_0=a\right], \\
A_S^{\pi}&:=S_C^{\pi}(s, a) - S_C^{\pi}(s),
\end{aligned}
\end{equation}
where $S_C^{\pi}$ and $A_S^{\pi}$ are called cost square value and advantage, and expectations of the cost value and cost square value are denoted as $J_C(\pi):=\underset{s\sim\rho}{\mathbb{E}}\left[V_C^{\pi}(s)\right]$ and $J_S(\pi):=\underset{s\sim\rho}{\mathbb{E}}\left[S_C^{\pi}(s)\right]$, respectively.
Also, a doubly discounted state distribution is defined as $d_2^{\pi}:=(1-\gamma^2)\sum_{t=0}^{\infty}\gamma^{2t}P(s_t=s|\pi)$.
With state distributions, the expectations of the cost and cost square value can be expressed as follows \cite{kim2022trc}:
\begin{equation}
\small
\label{eq:J_C and J_S}
\begin{aligned}
J_{C}(\pi) &= \frac{1}{1-\gamma}\underset{d^{\pi}, \pi, \mathcal{P}}{\mathbb{E}}\left[C(s, a, s')\right], \\
J_S(\pi) &= \frac{1}{1-\gamma^2}\underset{d_2^{\pi}, \pi, \mathcal{P}}{\mathbb{E}}\left[C(s, a, s')^2 + 2\gamma C(s, a, s')V_C^{\pi}(s')\right].
\end{aligned}
\end{equation}
Assuming that the cost return follows the Gaussian distribution as in \cite{kim2022trc, yang2021wcsac}, CVaR can be expressed as follows:
\begin{equation}
\small
\label{eq:CVaR}
\mathrm{CVaR}_{\alpha}(C_{\pi}) = J_{C}(\pi) + \frac{\phi(\Phi^{-1}(\alpha))}{\alpha}\sqrt{J_{S}(\pi) - J_{C}(\pi)^2},
\end{equation}
where $\alpha$ is a risk level for CVaR, and $\phi$ and $\Phi$ are probability and cumulative density functions of the standard normal distribution, respectively.

\subsection{Off-Policy Trust Region Policy Optimization}

Meng \textit{et al.} \cite{meng2021offtrpo} improved trust region policy optimization (TRPO) \cite{schulman2015trust} using replay buffers and shown that monotonic improvement is guaranteed.
The method proposed in \cite{meng2021offtrpo}, called \textit{off-policy TRPO}, estimates an objective of a new policy $\pi'$ using a surrogate function which can utilize a large number of data from replay buffers.
The surrogate function is expressed as follows:
\begin{equation}
\small
\label{eq:off-trpo surrogates}
\begin{aligned}
J^{\mu, \pi}(\pi') &:= J(\pi) + \frac{1}{1-\gamma}\underset{d^{\mu},\mu}{\mathbb{E}}\left[\frac{\pi'(a|s)}{\mu(a|s)}A^{\pi}(s, a)\right],
\end{aligned}
\end{equation}
where $\mu$ is a behavioral policy and $\pi$ is a policy before being updated.
The objective function of $\pi'$ has a lower bound, which is derived using the surrogate function $J^{\mu, \pi}(\pi')$ as follows:
\begin{equation}
\label{eq:lower bound}
\small
\begin{aligned}
J(\pi') &\geq J^{\mu,\pi}(\pi') - \frac{4\epsilon_R\gamma}{(1 - \gamma)^2} D_{\mathrm{TV}}^{\mathrm{max}}(\mu||\pi')D_{\mathrm{TV}}^{\mathrm{max}}(\pi||\pi'),
\end{aligned}
\end{equation}
where $D_{\mathrm{TV}}^{\mathrm{max}}(\mu||\pi):=\underset{s}{\mathrm{max}}D_{\mathrm{TV}}(\mu(\cdot|s)||\pi(\cdot|s))$, $D_{\mathrm{TV}}$ is the total variation (TV) distance, and $\epsilon_R=\underset{s,a}{\mathrm{max}}\left|A^{\pi}(s,a)\right|$.
By iteratively maximizing the lower bound of $J(\pi')$, off-policy TRPO can guarantee that the objective is monotonically improved \cite{meng2021offtrpo}.
However, the surrogate function (\ref{eq:off-trpo surrogates}) is only valid for expectation-based objectives, so there is a need to develop new surrogate functions for objectives and constraints if a different safety measure is considered. 

\section{PROPOSED METHOD}

We aim to develop a CVaR-constrained RL method with high sample efficiency.
Therefore, we propose a trust region-based method in an off-policy manner which optimizes the following problem: 
\begin{equation}
\label{eq:CVaR constrained RL problem}
\small
\begin{aligned}
\underset{\pi}{\mathrm{maximize}}& \; J(\pi) \quad
\mathbf{s.t.} \; \mathrm{CVaR}_{\alpha}(C_{\pi}) \leq d/(1 - \gamma), \\
\end{aligned}
\end{equation}
where $d$ is a limit value.
Since (\ref{eq:CVaR constrained RL problem}) is maximizing the reward return while limiting the CVaR of the cost return for the policy $\pi$, it is necessary to estimate $J(\pi)$ and $\mathrm{CVaR}_{\alpha}(C_{\pi})$ using trajectories collected by a behavioral policy $\mu$.
Thus, we introduce surrogate functions for the constraint in (\ref{eq:CVaR constrained RL problem}) and approximate CVaR using the surrogate functions.
In the rest of this section, we show that the approximation is bounded and finally describe the proposed algorithm.

\subsection{Surrogate Functions}

Calculating CVaR of a random policy $\pi'$ using (\ref{eq:CVaR}) requires trajectories sampled by $\pi'$, which is computationally expensive.
Therefore, it is desirable to estimate CVaR using trajectories sampled from a behavioral policy $\mu$ and the current policy $\pi$ rather than calculate it directly.
To estimate the CVaR constraint, two surrogate functions are proposed as follows:
\begin{equation}
\small
\label{eq:surrogates}
\begin{aligned}
J_C^{\mu, \pi}(\pi') &:= J_C(\pi) + \frac{1}{1-\gamma}\underset{d^{\mu},\mu}{\mathbb{E}}\left[\frac{\pi'(a|s)}{\mu(a|s)}A_C^{\pi}(s, a)\right], \\
J_S^{\mu, \pi}(\pi') &:= J_S(\pi) + \frac{1}{1-\gamma^2}\underset{d_2^{\mu},\mu}{\mathbb{E}}\left[\frac{\pi'(a|s)}{\mu(a|s)}A_S^{\pi}(s, a)\right].
\end{aligned}
\end{equation}
In (\ref{eq:surrogates}), $J_C(\pi)$ and $J_S(\pi)$ can be calculated using (\ref{eq:J_C and J_S}) with trajectories sampled from $\pi$, and the other terms can be calculated using trajectories from $\mu$.
Using the proposed surrogate functions and (\ref{eq:CVaR}), CVaR of a random policy $\pi'$ can be approximated as follows:
\begin{equation}
\small
\label{eq:approximated CVaR}
\begin{aligned}
\overline{\mathrm{CVaR}}_{\alpha}(C_{\pi'}) \!:=\! J_{C}^{\mu,\pi}(\pi') \!+\! \frac{\phi(\Phi^{-1}(\alpha))}{\alpha}\sqrt{J_{S}^{\mu, \pi}(\pi') - J_{C}^{\mu, \pi}(\pi')^2}.
\end{aligned}
\end{equation}
The next section shows that the difference between the above approximation and the true value of CVaR is bounded.

\subsection{Upper Bound}

Before showing the upper bound of CVaR, the following notations are introduced for brevity:
\begin{equation}
\small
\label{eq:brevity}
\begin{aligned}
&D(\mu,\pi) := \underset{s}{\mathrm{max}}D_{\mathrm{TV}}(\mu(\cdot|s)||\pi(\cdot|s)), \\
\epsilon_C := &\underset{s,a}{\mathrm{max}}\left|A_C^{\pi}(s,a)\right|,\quad \epsilon_S := \underset{s,a}{\mathrm{max}}\left|A_S^{\pi}(s,a)\right|.
\end{aligned}
\end{equation}
Then, the upper bound of $\mathrm{CVaR}_{\alpha}(C_{\pi'})$ can be derived as follows.
\begin{theorem}
\label{theorem:upper bound}
For any polices $\mu$, $\pi$, and $\pi'$, define \\
$\epsilon_{\mathrm{CVaR}} := \frac{(\frac{4\epsilon_C\gamma}{1-\gamma^2})^2 D(\mu, \pi')D(\pi, \pi') + \frac{8\epsilon_C\gamma}{(1-\gamma)^2}J_C(\pi')+\frac{2\epsilon_S\gamma^2}{(1-\gamma^2)^2}}{\overline{\mathrm{CVaR}}_{\alpha}(C_{\pi'})}$.
Then, the following inequality holds:
\begin{equation}
\label{eq:upper bound}
\begin{aligned}
&\mathrm{CVaR}_{\alpha}(C_{\pi'}) \leq \overline{\mathrm{CVaR}}_{\alpha}(C_{\pi'}) \\
&+ \left(\frac{4\epsilon_C\gamma}{(1 - \gamma)^2} + \epsilon_{\mathrm{CVaR}}\frac{\phi(\Phi^{-1}(\alpha))}{\alpha}\right)D(\mu, \pi')D(\pi, \pi'),
\end{aligned}
\end{equation}
where the equality holds when $\pi = \pi'$.
\end{theorem}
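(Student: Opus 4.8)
The plan is to control the gap $\mathrm{CVaR}_{\alpha}(C_{\pi'}) - \overline{\mathrm{CVaR}}_{\alpha}(C_{\pi'})$ by bounding the error each surrogate introduces. Writing $k := \phi(\Phi^{-1}(\alpha))/\alpha$ and subtracting (\ref{eq:approximated CVaR}) from (\ref{eq:CVaR}), the gap splits into a \emph{mean part} $J_C(\pi') - J_C^{\mu,\pi}(\pi')$ and a \emph{deviation part} $k\big(\sqrt{J_S(\pi')-J_C(\pi')^2} - \sqrt{J_S^{\mu,\pi}(\pi')-J_C^{\mu,\pi}(\pi')^2}\big)$. The theorem then reduces to two tasks: first, bounding the surrogate errors $|J_C(\pi')-J_C^{\mu,\pi}(\pi')|$ and $|J_S(\pi')-J_S^{\mu,\pi}(\pi')|$; and second, converting the difference of two square roots into the stated $\epsilon_{\mathrm{CVaR}}$ term.

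For the surrogate errors I would reuse the off-policy TRPO machinery behind (\ref{eq:lower bound}) verbatim, swapping reward for cost. Since $J_C^{\mu,\pi}$ has exactly the form of (\ref{eq:off-trpo surrogates}) with $R$ replaced by $C$ and discount $\gamma$, the two-sided version of (\ref{eq:lower bound}) gives $|J_C(\pi')-J_C^{\mu,\pi}(\pi')| \le \frac{4\epsilon_C\gamma}{(1-\gamma)^2}D(\mu,\pi')D(\pi,\pi')$, which is precisely the first additive term of (\ref{eq:upper bound}). For $J_S$, the surrogate in (\ref{eq:surrogates}) is the same construction on the doubly discounted distribution $d_2^{\pi}$, i.e.\ with effective discount $\gamma^2$ and effective one-step signal $C^2 + 2\gamma C V_C^{\pi}$ from (\ref{eq:J_C and J_S}); the identical argument yields $|J_S(\pi')-J_S^{\mu,\pi}(\pi')| \le \frac{c\,\epsilon_S\gamma^2}{(1-\gamma^2)^2}D(\mu,\pi')D(\pi,\pi')$ for the corresponding constant $c$. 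These inequalities are the only place the TV distances enter, and they generate the factors $\gamma/(1-\gamma)^2$ and $\gamma^2/(1-\gamma^2)^2$.

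The harder step is the deviation part, since the standard deviation is nonlinear in $(J_C,J_S)$. I would use $\sqrt{a}-\sqrt{b} = (a-b)/(\sqrt a + \sqrt b)$ with $a := J_S(\pi')-J_C(\pi')^2$ and $b := J_S^{\mu,\pi}(\pi')-J_C^{\mu,\pi}(\pi')^2$; the case $a \le b$ makes this nonpositive and the claim trivial, so I focus on $a > b$. The algebraic identity $a - b = \big(J_S(\pi')-J_S^{\mu,\pi}(\pi')\big) - \big(J_C(\pi')-J_C^{\mu,\pi}(\pi')\big)\big(J_C(\pi')+J_C^{\mu,\pi}(\pi')\big)$, after substituting the two surrogate-error bounds and writing $J_C(\pi')+J_C^{\mu,\pi}(\pi') = 2J_C(\pi') - (J_C(\pi')-J_C^{\mu,\pi}(\pi'))$, produces three contributions: a pure $\epsilon_S$ term, a cross term scaling like $\epsilon_C J_C(\pi')$, and a quadratic $\epsilon_C^2$ term — exactly the three summands in the numerator of $\epsilon_{\mathrm{CVaR}}$. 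Lower-bounding the denominator $\sqrt a + \sqrt b$ in terms of $\overline{\mathrm{CVaR}}_{\alpha}(C_{\pi'})$, via the identity $k\sqrt b = \overline{\mathrm{CVaR}}_{\alpha}(C_{\pi'}) - J_C^{\mu,\pi}(\pi')$ and the nonnegativity of the cost value, yields the division by $\overline{\mathrm{CVaR}}_{\alpha}(C_{\pi'})$ defining $\epsilon_{\mathrm{CVaR}}$; multiplying back by $k$ reproduces the second additive term of (\ref{eq:upper bound}).

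I expect the constant bookkeeping in this last step to be the main obstacle: tracking signs so that discarding $\sqrt a$ from the denominator is legitimate, and choosing the denominator estimate so that exactly $\overline{\mathrm{CVaR}}_{\alpha}(C_{\pi'})$ — rather than $\sqrt b$ or $\overline{\mathrm{CVaR}}_{\alpha}(C_{\pi'}) - J_C^{\mu,\pi}(\pi')$ — appears, are the delicate points that pin down the precise groupings of $\gamma$ in $\epsilon_{\mathrm{CVaR}}$. The equality claim is the easy end-check: when $\pi = \pi'$ the importance-weighted advantage terms in (\ref{eq:surrogates}) vanish because $\mathbb{E}_{a \sim \pi}[A_C^{\pi}] = \mathbb{E}_{a \sim \pi}[A_S^{\pi}] = 0$, so $J_C^{\mu,\pi}(\pi') = J_C(\pi')$ and $J_S^{\mu,\pi}(\pi') = J_S(\pi')$, giving $\overline{\mathrm{CVaR}}_{\alpha}(C_{\pi'}) = \mathrm{CVaR}_{\alpha}(C_{\pi'})$, while $D(\pi,\pi') = 0$ simultaneously kills the correction term.
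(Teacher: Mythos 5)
Your overall route is the paper's route: you split the gap into a mean part and a deviation part, bound the mean part by the off-policy TRPO result of \cite{meng2021offtrpo} (this is literally the paper's first step), rationalize the difference of square roots as $(a-b)/(\sqrt{a}+\sqrt{b})$, expand $a-b$ into the same three contributions (an $\epsilon_S$ term, a cross term in $\epsilon_C J_C(\pi')$, and a quadratic $\epsilon_C^2$ term), lower-bound the denominator by a CVaR quantity, and check equality at $\pi=\pi'$ exactly as the paper does. Incidentally, your denominator step, which lands directly on $\overline{\mathrm{CVaR}}_{\alpha}(C_{\pi'})$, is more consistent with the stated definition of $\epsilon_{\mathrm{CVaR}}$ than the paper's own writing, which lower-bounds by the true $\mathrm{CVaR}_{\alpha}(C_{\pi'})$ and then silently switches to $\overline{\mathrm{CVaR}}_{\alpha}(C_{\pi'})$; both versions share the same delicate, not fully justified replacement of $\sqrt{a}+\sqrt{b}$ by a CVaR value.

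The genuine gap is your treatment of $\left|J_S(\pi')-J_S^{\mu,\pi}(\pi')\right|$. You claim the ``identical argument'' applies with effective discount $\gamma^2$ and effective one-step signal $C^2+2\gamma C V_C^{\pi}$. But the off-policy TRPO bound assumes a fixed reward function, whereas the signal that reproduces $J_S(\pi')$ in (\ref{eq:J_C and J_S}) is $C^2+2\gamma C V_C^{\pi'}$: it contains the value function of the policy being evaluated. Applying the TRPO machinery verbatim with the frozen signal $C^2+2\gamma C V_C^{\pi}$ bounds the distance of $J_S^{\mu,\pi}(\pi')$ to the wrong quantity, and the leftover correction, proportional to $\underset{d_2^{\pi'},\pi'}{\mathbb{E}}\left[C\left(V_C^{\pi'}(s')-V_C^{\pi}(s')\right)\right]$, is first order in $D(\pi,\pi')$ alone; it carries no factor of $D(\mu,\pi')$, so it cannot be absorbed into the product form $D(\mu,\pi')D(\pi,\pi')$ the theorem asserts. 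This is precisely why the paper needs its only lemma (Lemma~\ref{lemma:square upper bound}): it invokes the cost-square-value identity from TRC (Corollary 1 in \cite{kim2022trc}) to express $J_S$ through a residual $\delta_f^{\pi}$ in which the policy dependence of both the action distribution and $V_C^{\pi}$ is built in, chooses $f=S_C^{\pi}$, and bounds $\langle d_2^{\pi'}-d_2^{\mu},\,\delta_f^{\pi'}-\delta_f^{\pi}\rangle$, yielding the constant $\frac{2\epsilon_S\gamma^2}{(1-\gamma^2)^2}$. Your agnostic ``corresponding constant $c$'' hides this: a verbatim TRPO substitution would suggest $4$ rather than $2$, and the constant appears explicitly in the definition of $\epsilon_{\mathrm{CVaR}}$, so the statement you are proving depends on getting it right.
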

The proof is given in Appendix \ref{appendix:upper bound}. 
Theorem \ref{theorem:upper bound} shows that $\overline{\mathrm{CVaR}}_{\alpha}(C_{\pi'})$ approximates $\mathrm{CVaR}_{\alpha}(C_{\pi'})$ well enough when $\pi$ and $\pi'$ are close to each other since $D(\mu, \pi')D(\pi, \pi')$ becomes sufficiently small.
In addition, as shown in the definition, $\epsilon_{\mathrm{CVaR}}$ is highly correlated to $\epsilon_C$ and $\epsilon_S$, and they become larger when the state change according to actions is huge.
Therefore, it is possible to decrease $\epsilon_{\mathrm{CVaR}}$ by designing an RL environment with a shorter time interval.

\subsection{Off-Policy TRC}

Directly calculating the objective and constraints of a random policy through sampling is computationally expensive, so we utilize their bounds to get the policy gradient efficiently.
Using the upper bound of CVaR derived in Theorem \ref{theorem:upper bound} and the lower bound of the objective in (\ref{eq:lower bound}), we can build a subproblem to obtain a new policy $\pi'$ given an old policy $\pi$ and a behavioral policy $\mu$ as follows:
\begin{equation}
\small
\label{eq:subproblem}
\begin{aligned}
&\;\; \underset{\pi'}{\mathrm{maximize}}\;J^{\mu, \pi}(\pi') - 4\epsilon_R\gamma D(\mu, \pi')D(\pi, \pi')/(1 - \gamma)^2\\ 
&\mathbf{s.t.} \; \overline{\mathrm{CVaR}}_{\alpha}(C_{\pi'}) + \\
&(\frac{4\epsilon_C\gamma}{(1 - \gamma)^2} \!+\! \epsilon_{\mathrm{CVaR}}\frac{\phi(\Phi^{-1}(\alpha))}{\alpha})D(\mu, \pi')D(\pi, \pi') \!\leq\! \frac{d}{1 - \gamma}.
\end{aligned}
\end{equation}
However, the TV distances $D(\mu, \pi')D(\pi, \pi')$ in the subproblem hinder training policies with large update steps, which is an issue raised in several trust region-related literature \cite{meng2021offtrpo, achiam2017constrained, schulman2015trust}.
Thus, we remove the TV distances from (\ref{eq:subproblem}) and add a trust region constraint as done in \cite{meng2021offtrpo, kim2022trc, schulman2015trust} to enlarge update steps.
For the trust region constraint, we convert the TV distances into KL divergences using the Pinsker's inequality and formulate the constraint as follows:
\begin{equation}
\small
\label{eq:trust region}
D_{\mathrm{KL}}(\pi||\pi') + \delta_{\mathrm{old}} \leq \delta,
\end{equation}
where {\small $\delta_{\mathrm{old}} = \sqrt{D_{\mathrm{KL}}(\mu||\pi)(\delta + D_{\mathrm{KL}}(\mu||\pi)/4)} - D_{\mathrm{KL}}(\mu||\pi)/2$}, {\small $D_{\mathrm{KL}}(\pi||\pi')=\underset{s\sim d_{\mu}}{\mathbb{E}}\left[D_{\mathrm{KL}}(\pi(\cdot|s)||\pi'(\cdot|s))\right]$}, and $\delta$ is a constant for the trust region size.
The derivation is presented in Appendix \ref{appendix:trust region}.
When the KL divergence between $\pi$ and $\mu$ is large enough, $\delta - \delta_{\mathrm{old}}$ becomes nearly zero, so the trust region constraint ensures that the policy is updated not far from the replay buffer.
Hence, the subproblem (\ref{eq:subproblem}) can be reformulated as follows:
\begin{equation}
\small
\label{eq:final subproblem}
\begin{aligned}
&\qquad \underset{\pi'}{\mathrm{maximize}}\;J^{\mu, \pi}(\pi') \\ 
& \mathbf{s.t.} \; \overline{\mathrm{CVaR}}_{\alpha}(C_{\pi'}) \leq d/(1 - \gamma), \\
&\quad D_{\mathrm{KL}}(\pi||\pi') + \delta_{\mathrm{old}} \leq \delta.
\end{aligned}
\end{equation}
The subproblem (\ref{eq:final subproblem}) is nonconvex, so we approximate the objective and the CVaR constraint as linear and the trust region as quadratic, and obtain $\pi'$ using linear and quadratic constrained linear programming (LQCLP) as in \cite{kim2022trc, achiam2017constrained}.
If the feasibility set of (\ref{eq:final subproblem}) is empty, we solely minimize the approximated CVaR under the trust region constraint.

To update the value and cost value networks, we use the following retrace estimators as target values \cite{meng2021offtrpo, wang2017acer}.
\begin{equation}
\small
\label{eq:retrace for value}
\begin{aligned}
\overline{V}_t &= r_t + \gamma V^{\pi}(s_{t+1}) + \gamma\lambda\overline{\rho}_{t+1}(\overline{V}_{t+1} - V^{\pi}(s_{t+1})), \\
\overline{V}_{C, t} &= c_t + \gamma V_{C}^{\pi}(s_{t+1}) + \gamma\lambda\overline{\rho}_{t+1}(\overline{V}_{C, t+1} - V_{C}^{\pi}(s_{t+1})), \\
\end{aligned}
\end{equation}
where $r_t=R(s_t, a_t, s_{t+1})$, $c_t=C(s_t, a_t, s_{t+1})$, $\overline{\rho}_{t} = \mathrm{min}(1, \frac{\pi(a_t|s_t)}{\mu(a_t|s_t)})$, and $\lambda$ is a trace-decay value.
The retrace estimator for the cost square value can also be defined as follows.
\begin{equation}
\small
\label{eq:retrace for square value}
\begin{aligned}
\overline{V}_{S, t} = & \; c_t^2 + 2\gamma c_t V_{C}^{\pi}(s_{t+1}) + \gamma^2 V_{S}^{\pi}(s_{t+1}) \\
&+ \gamma^2\lambda\overline{\rho}_{t+1}(\overline{V}_{S, t+1} - V_{S}^{\pi}(s_{t+1})). \\
\end{aligned}
\end{equation}
Then, the value, cost value, and cost square networks are updated by minimizing the mean squared error between the target values. 
The overall process of off-policy TRC is summarized in Algorithm \ref{algo:proposed algorithm}.

\begin{algorithm}[!t]
\small
\caption{Off-Policy TRC}
\label{algo:proposed algorithm}
\KwData{policy network $\pi(\cdot|s;\theta)$, value network $V^{\pi}(s;\phi)$, cost value network $V_C^{\pi}(s;\phi_C)$, cost square network $S_C^{\pi}(s;\phi_S)$, batch size $B$, length of replay buffer $L$, and collect steps $S$.}
Initialize network parameters $\theta$, $\phi$, $\phi_C$, and $\phi_S$. \\
Initialize replay buffer $\mathcal{D}$ of length $L$. \\
\For{epochs=1, P}{
    Initialize rollout buffer $\mathcal{R}$. \\
    \For{t=1, S}{
        Sample an action $a_t$ and calculate the sampling probability $\mathrm{prob}_t$ from $\pi(\cdot|s_t;\theta)$. \\
        Take the action $a_t$ in the environment and get reward $r_t$, cost $c_t$, and next state $s_{t+1}$. \\
        Store $(s_t, a_t, \mathrm{prob}_t, r_t, c_t, s_{t+1})$ in $\mathcal{R}$. \\
    }
    Using (\ref{eq:J_C and J_S}), calculate $J_C(\pi)$ and $J_S(\pi)$ with $\mathcal{R}$. \\
    Concatenate $\mathcal{D}$ with $\mathcal{R}$. \\
    Sample trajectories $\mathcal{T}$ with length of $B$ from $\mathcal{D}$. \\
    Update policy paramterers $\theta$ by solving (\ref{eq:final subproblem}) with $\mathcal{T}$. \\
    Calculate target values in (\ref{eq:retrace for value}), (\ref{eq:retrace for square value}) and update value $\phi$, cost value $\phi_C$, and cost square $\phi_S$ parameters with the targets.
}
\end{algorithm}

\section{EXPERIMENTS}

We aim to answer the following questions through experiments:
\textbf{1)} Does off-policy TRC provide better performance and higher sample efficiency than other safe RL methods? \textbf{2)} Can it be applied to robots with different dynamics? \textbf{3)} Are the newly defined surrogate functions valid for off-policy data?
To answer these questions, we set up simulations and sim-to-real experiments with various types of robots and use several safe RL baseline methods. 
Next, the effectiveness of the surrogate function is evaluated by comparing off-policy TRC with a variant of TRC, which considers off-policy data as on-policy data.
In addition, experiments are also performed on different settings of the replay buffer-related hyperparameters to examine the effect of off-policy data on training.

\subsection{Simulation Setup}

\subsubsection{MuJoCo}

For robotic locomotion tasks, we use \texttt{HalfCheetah-v2} and \texttt{Walker2d-v2} provided by the MuJoCo simulator \cite{todorov2012mujoco}.
For stable movement in \texttt{HalfCheetah-v2}, the following cost function, which penalizes the agent if the angle of the torso is above a specific value, is defined.
\begin{equation}
\small
\label{eq:cost function of mujoco}
\begin{aligned}
C_{\mathrm{cheetah}}(s, a, s') := \mathrm{sigmoid}(k(|\theta_{\mathcal{T}}| - b)),
\end{aligned}
\end{equation}
where $\theta_{\mathcal{T}}$ is the angle of the torso, and $k$ ($=10$) and $b$ ($=\pi/4$) are constants.
In \texttt{Walker2d-v2}, to make the height of the center of mass away from the ground, the following cost function is defined.
\begin{equation}
\small
\label{eq:cost function of mujoco2}
\begin{aligned}
C_{\mathrm{walker}}(s, a, s') := \mathrm{sigmoid}(k(b - h_{\mathrm{CoM}})),
\end{aligned}
\end{equation}
where $h_{\mathrm{CoM}}$ is the height of the center of mass, and $k=15$, and $b=0.5$.
In the both tasks, agents are trained without early termination.

\subsubsection{Safety Gym}

The Safety Gym \cite{ray2019benchmarking} provides multiple robots and tasks for safe RL, and we use the following tasks: \texttt{Safexp-PointGoal1-v0}, \texttt{Safexp-CarGoal1-v0}, and \texttt{Safexp-DoggoGoal1-v0} with two modifications. 
The goal information is provided in a LIDAR form in the original state, which is not a realistic setting, so we modify the state to include the goal position.
Additionally, instead of providing a binary cost signal indicating whether the robot is in a hazard area or not, we use a soft cost signal which is defined by replacing $h_{\mathrm{CoM}}$ in (\ref{eq:cost function of mujoco2}) with the minimum distance to obstacles ($k=10$, $b=0.2$).
The other settings including the reward function are the same as the original.

\begin{figure}[t]
\centering
\hfill
\subfloat[Jackal Robot]%
{
    \label{sfig:real jackal}
    {
        \includegraphics[width=0.3\linewidth]{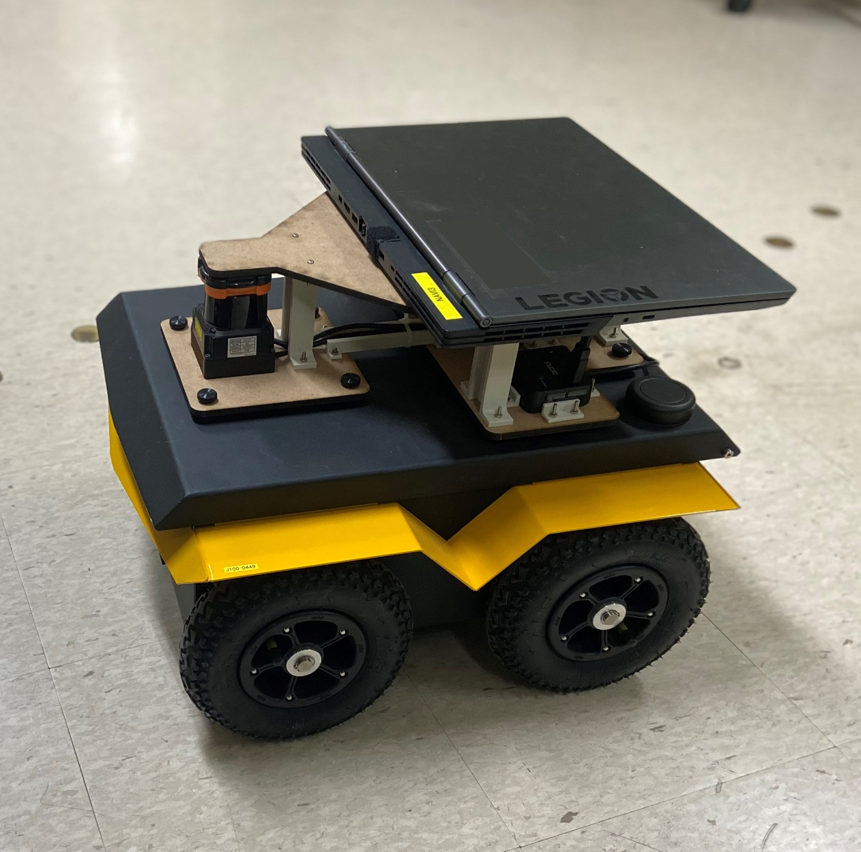}
    }
}\hfill
\subfloat[Jackal Evaluation Task]%
{
    \label{sfig:task diagram}
    {
        \includegraphics[width=0.42\linewidth]{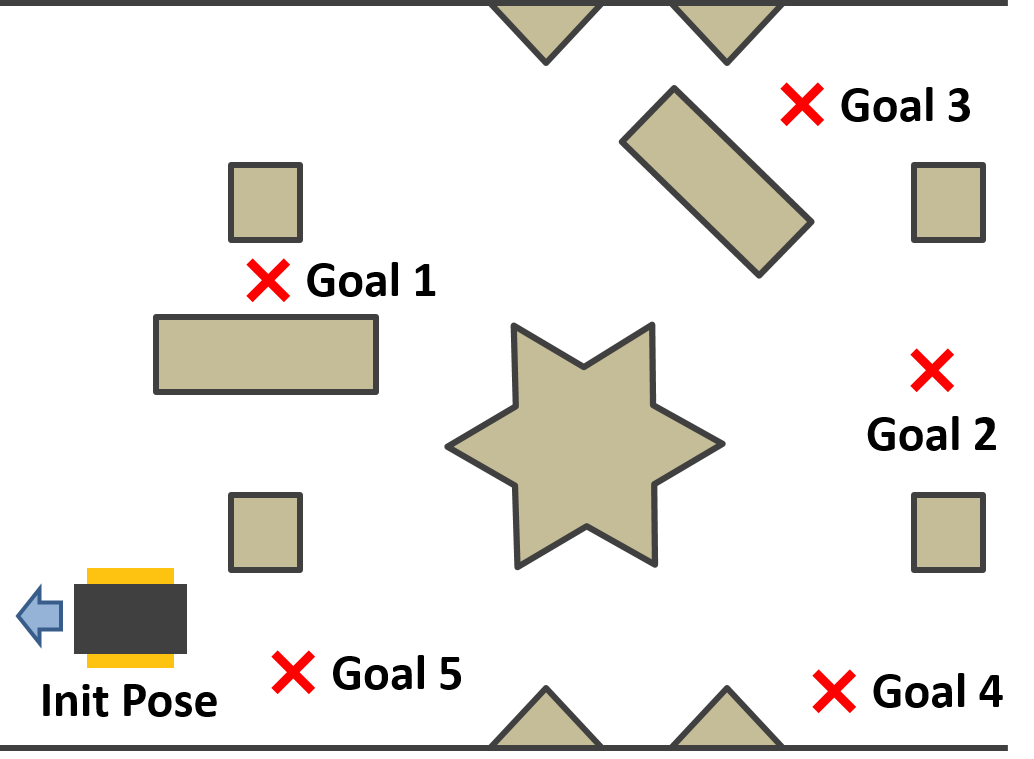}
    }
}\hfill
\caption{Jackal robot and the real-world evaluation task.
The black lines represent walls and the brown boxes represent randomly placed obstacles. 
The target goal is assigned to the next position each time the robot arrives.
}
\label{fig:real_task}
\vspace{-10pt}
\end{figure}

\begin{figure*}[t]
\centering
\subfloat[Half-Cheetah]
{
    \label{sfig:halfcheetah result}
    {
        \includegraphics[width=0.172\linewidth]{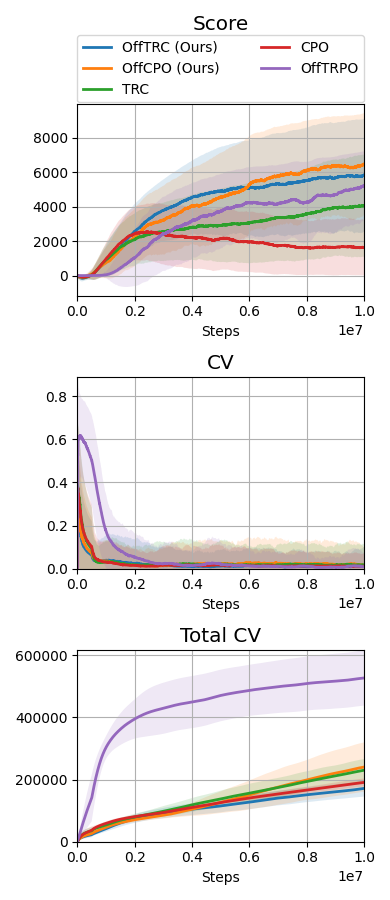}
    }
}\hfill
\subfloat[Walker-2d]
{
    \label{sfig:walker2d result}
    {
        \includegraphics[width=0.172\linewidth]{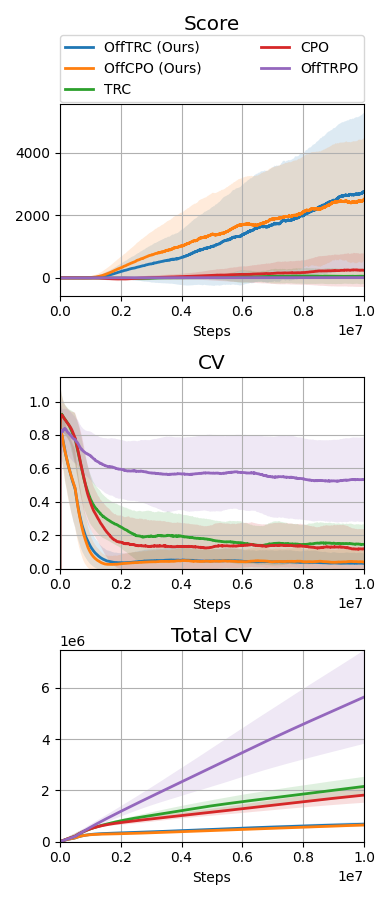}
    }
}\hfill
\subfloat[Point Goal]
{
    \label{sfig:point result}
    {
        \includegraphics[width=0.172\linewidth]{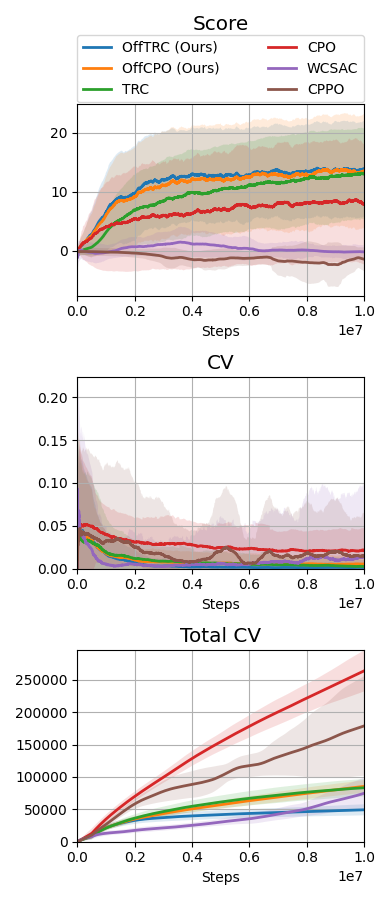}
    }
}\hfill
\subfloat[Car Goal]
{
    \label{sfig:car result}
    {
        \includegraphics[width=0.172\linewidth]{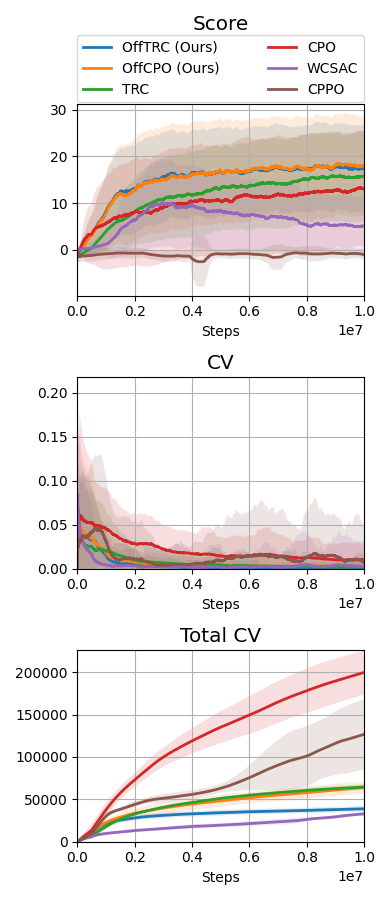}
    }
}\hfill
\subfloat[Doggo Goal]
{
    \label{sfig:doggo result}
    {
        \includegraphics[width=0.172\linewidth]{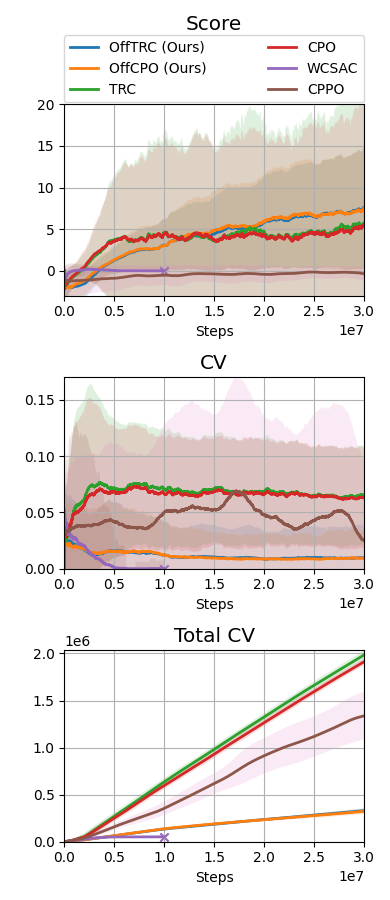}
    }
}
\caption{Training curves of the simulation experiments. 
The the top row shows the scores for different tasks, the middle row shows the number of constraint violations (CVs) divided by the episode length, and the bottom row shows the total number of CVs during training.
Each graph is obtained by training with five random seeds.
}
\label{fig:simulation results}
\vspace{-10pt}
\end{figure*}

\begin{figure}[!t]
\centering
\subfloat[Jackal simulation training curves.]
{
    \label{sfig:jackal sim result}
    {
        \includegraphics[width=0.4\textwidth]{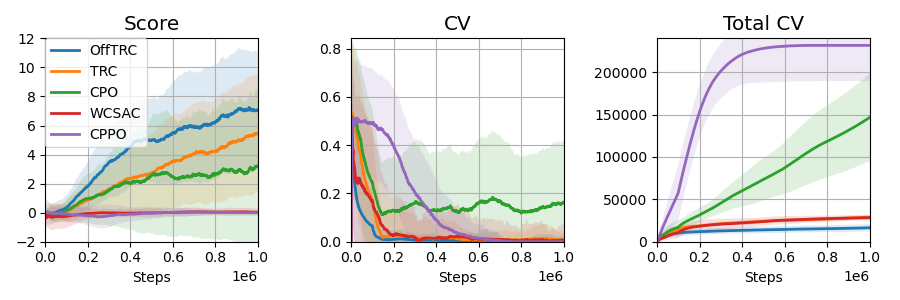}
    }
}\hfill
\subfloat[Real-world evaluation results. 
Bars represent mean values, and error bars represent standard deviations. 
Note that both OffTRC and TRC show zero CV and zero failures.]
{
    \label{sfig:evaluation result}
    {
        \includegraphics[width=0.4\textwidth]{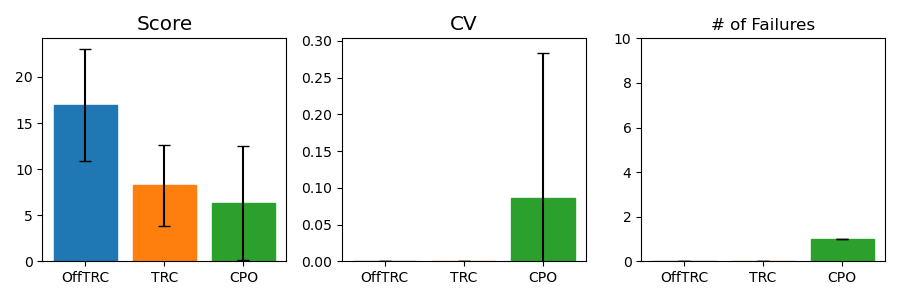}
    }
}\hfill
\caption{Training curves of the Jackal simulation and the real-world evaluation result. 
The number of failures is counted when the distance between the robot and obstacles is lower than $0.2 m$.
The Evaluation results are obtained by averaging the results of 10 episodes.}
\label{fig:jackal results}
\vspace{-10pt}
\end{figure}

\subsection{Sim-to-Real Experiment Setup}

To verify that off-policy TRC is applicable to real robots, we conduct a sim-to-real experiment on a UGV robot, Jackal from Clearpath \cite{clearpath2015jackal}, shown in Fig. \ref{sfig:real jackal}.
Agents are first trained in a MuJoCo simulator, where the state space is a 32-dimensional space consisting of LIDAR values, linear and angular velocities of the robot, and the goal position, and the action space is a two-dimensional space consisting of the linear acceleration and angular velocity.
Eight obstacles are spawned at random positions in the beginning of each episode, and the reward and cost function are the same as the Safety Gym.
Once trained in simulation, agents are evaluated in the real world without additional training on a task shown in Fig. \ref{sfig:task diagram}, where goals are spawned sequentially.
If the agent gets closer than $0.2\;m$ to obstacles, a failure occurs and the episode ends.

In all tasks, the number of constraint violations (CVs) is counted when $C(s_t, a_t, s_{t+1}) \geq 0.5$, and the following score metric is used as in \cite{kim2022trc}.\footnote{The score metric is defined by reward sum divided by the number of CVs. Since the reward sum increases as safety is ignored, this metric is used for fair comparisons.}
\begin{equation}
\small
\mathrm{Score}:=\sum_{t=0}^{T-1}R_t \bigg/ \left(1 + \sum_{t=0}^{T-1}\mathrm{CV}_t\right).
\end{equation}
The policy and all value functions are modeled by neural networks with two hidden layers of 512 nodes, and the activation functions are \texttt{ReLU}.
The learning rate for all value networks is $0.0002$ and $\delta$ for the trust region is $0.001$.
We update networks at every collect steps $S$ ($=1000$) using trajectories of length $B$ ($=5000$) sampled from the replay buffer of length $L$ ($=50000$).
Exceptionally, we set $B=20000$ and $L=100000$ for the MuJoCo and doggo goal tasks.
For constraints, we use $\alpha$ for CVaR as $0.125$ or $1.0$ and $d$ as $0.025$.\footnote{
To give a tip for how to set $\alpha$ and $d$, if you want the agent to violate safety less than $25$ times out of $1000$ interaction steps with $95\%$ probability, set $d$ as $25.0/1000.0$ and $\alpha$ as $f^{-1}(\Phi^{-1}(0.95))$, where $f(x)=\phi(\Phi^{-1}(x))/x$.}

\subsection{Comparison with Baselines}

Throughout the experiments, we evaluate the proposed method, off-policy TRC, with $\alpha=0.125$, indicated as \textit{OffTRC}, and with $\alpha=1.0$, indicated as \textit{OffCPO}.
If $\alpha=1.0$, off-policy TRC becomes the off-policy version of constrained policy optimization (CPO) \cite{achiam2017constrained}, as the CVaR becomes an expectation if $\alpha=1.0$.
In the MuJoCo experiment, to show the sample efficiency, the off-policy TRC is compared to other trust region-based methods: CPO \cite{achiam2017constrained} and TRC \cite{kim2022trc}.
In addition, to check how the constraints defined in (\ref{eq:cost function of mujoco}) and (\ref{eq:cost function of mujoco2}) have an effect on the locomotion tasks, the traditional RL method, off-policy trust region policy optimization (OffTRPO) \cite{meng2021offtrpo} is also used.
For the Safety Gym and the sim-to-real experiments, we use the following risk measure-constrained RL methods as baselines: CVaR-proximal policy optimization (CPPO) \cite{ying2021cppo}, WCSAC \cite{yang2021wcsac}, and TRC \cite{kim2022trc}.
CPPO estimates CVaR from sampling, while WCSAC and TRC estimate CVaR using distributional safety critics.
To compare with expectation-based safe RL methods, CPO is also used as a baseline.

The MuJoCo and Safety Gym simulation results are shown in Fig. \ref{fig:simulation results}, and the Jackal simulation and real-world evaluation results are shown in Fig. \ref{fig:jackal results}.
WCSAC and CPPO are excluded from the real-world evaluation because the agents trained by them do not move but freeze at initial positions in the Jackal simulation.
In all tasks, including the real-world experiment, the proposed methods, which are OffTRC and OffCPO, show the highest scores and the lowest total number of CVs.
OffTRC and OffCPO show the similar level in scores, but OffTRC shows lower total numbers of CVs than OffCPO in half-cheetah, point goal, and car goal tasks.
In the walker and doggo tasks, it is difficult to satisfy the constraints due to the unstable dynamics, so the number of CVs of OffTRC and OffCPO are similar.
However, OffCPO scores higher than OffTRC in the half-cheetah task, which means that the CVaR constraints make it difficult to learn tasks with stable dynamics.
Observing that the scores increase the fastest and the total number of CVs are significantly low, the proposed methods show excellent sample efficiency by simultaneously increasing the return while satisfying the safety constraints.
OffTRPO results in the third highest score in the half-cheetah task, showing that the number of CVs converges to zero.
As the dynamics model of the half-cheetah is stable, OffTRPO can train policies well without any constraints.
Nevertheless, OffTRPO shows the worst performance in the walker-2d task, inferring that restricting the height of CoM helps training.
CPO shows the third-highest score in the walker-2d, but there is a significant score gap between off-policy TRC and CPO, which indicates that sample efficiency is essential to achieve outstanding performance.
Additionally, CPO shows excessive numbers of CVs in all other tasks and records a failure in the real-world evaluation, as shown in Fig. \ref{sfig:evaluation result}.
It means that risk measure-constrained RL methods rather than expectation-constrained RL methods are required to prevent failures.
TRC shows the third-best performance for all tasks except the MuJoCo tasks.
Because TRC is an on-policy RL algorithm, it is difficult to train complex robots such as multi-joint robots, since these robotic tasks require large quantities of training data.
Still, as TRC is a trust-region-based safe RL method, it shows monotonic performance improvement and low CVs.
WCSAC uses the Lagrangian method to handle the CVaR constraint and is one of the off-policy RL methods, so it can be expected to show high sample efficiency.
However, WCSAC shows low scores in all tasks and synthesizes immobile policies in the doggo and Jackal tasks because the Lagrangian method makes training unstable. 
CPPO is also one of the CVaR-constrained RL method and estimate CVaR using a sampling method.
However, CPPO suffers performance degradation because it uses a sampling-based estimation for CVaR with large variances and the Lagrangian approach to handle the constraints.

\subsection{Effectiveness of Surrogate Functions}
\label{sec:effectiveness of surrogate functions}

To show that the proposed surrogate functions (\ref{eq:surrogates}) are effective for off-policy data, we perform an experiment comparing a variant of TRC against off-policy TRC.
The TRC variant treats off-policy data as on-policy data and updates the policy via the TRC method.
The result is presented in Fig. \ref{fig:effectiveness}.
In the training curve, the CVs of the variant are high relative to the off-policy TRC, which can be considered estimation errors due to the distributional shift in the training data.
Hence, we can conclude that the CVaR estimation through the proposed surrogate functions (\ref{eq:surrogates}) is critical for utilizing off-policy data.

\begin{figure}[t]
\centering
\includegraphics[width=0.4\textwidth]{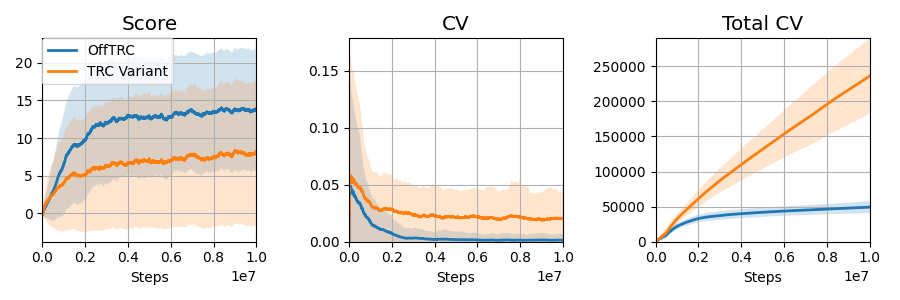}
\caption{Comparison with off-policy TRC and a variant of TRC to show Effectiveness of the surrogate functions. The TRC variant is trained using TRC with off-policy data.}
\label{fig:effectiveness}
\vspace{-10pt}
\end{figure}

\subsection{Ablation Study}

Since the off-policy TRC uses off-policy data from a replay buffer, it is required to analyze how the parameters related to the replay buffer affect learning.
The replay buffer-related parameters are the batch size (the length of sampled trajectories for policy update), the length of the replay buffer, and the collect steps (the policy is updated for each collect step).
We train the off-policy TRC on the point goal task for each parameter with three different values, and the results are shown in Table \ref{table:ablation}.
The parameter with the greatest influence on training is the batch size.
The larger the batch size, the lower the total number of CVs.
As the batch size increases, the search direction for policy update becomes accurate, so the agent can effectively lower the constraint value at the early training phase.
Next, the collect steps affect the length of on-policy data and the number of policy updates.
Since on-policy data is used to calculate $J_C(\pi)$ and $J_S(\pi)$, constituting the approximated CVaR $\overline{\mathrm{CVaR}}_{\alpha}(C_{\pi'})$, the approximation error increases as the amount of on-policy data decreases.
Thus, the larger the collect steps, the lower the total number of CVs, but also the lower the reward sum, as shown in Table \ref{table:ablation}.
Lastly, while the length of the replay buffer has a little effect on performance, if it is too short, the total number of CVs can increase slightly. 

To give an intuition for the risk level $\alpha$, experiments with different risk levels for TRC and off-policy TRC are also conducted, and the result is shown in Fig. \ref{fig:pareto}.
Observing that policies with high-risk levels are located in the upper right corner of the figure, a high return but risky policy can be obtained with a high-risk level.

\begin{table}[t]
\centering
\resizebox{0.9\columnwidth}{!}{%
\setlength\doublerulesep{1.0pt}
\begin{tabular}{ll|l|l|l}
\hline
\multicolumn{2}{l|}{}                                     & Reward Sum $\uparrow$    & CV (mean, std) $\downarrow$         & Total CV $\downarrow$      \\ \hhline{=====}
\multicolumn{1}{l|}{\multirow{3}{*}{Batch size}}    & 2e3 & \textbf{17.073} & 0.0024, 0.0067          & 85211          \\ \cline{2-5} 
\multicolumn{1}{l|}{}                               & 5e3 & 14.616          & \textbf{0.0007, 0.0035} & 45238          \\ \cline{2-5} 
\multicolumn{1}{l|}{}                               & 1e4 & 14.550          & 0.0007, 0.0037          & \textbf{27951} \\ \hhline{=====}
\multicolumn{1}{l|}{\multirow{3}{*}{Collect steps}} & 5e2 & \textbf{15.270} & 0.0016, 0.0058          & 42945          \\ \cline{2-5} 
\multicolumn{1}{l|}{}                               & 1e3 & 14.616          & 0.0007, 0.0035          & 45238          \\ \cline{2-5} 
\multicolumn{1}{l|}{}                               & 2e3 & 13.872          & \textbf{0.0006, 0.0041} & \textbf{39407} \\ \hhline{=====}
\multicolumn{1}{l|}{\multirow{3}{*}{Replay length}} & 2e4 & 14.934          & 0.0017, 0.0061          & 52788          \\ \cline{2-5} 
\multicolumn{1}{l|}{}                               & 5e4 & 14.616          & \textbf{0.0007, 0.0035} & \textbf{45238} \\ \cline{2-5} 
\multicolumn{1}{l|}{}                               & 1e5 & \textbf{15.072} & 0.0008, 0.0047          & 45731          \\ \hline
\end{tabular}
}
\caption{Ablation study for the replay buffer-related parameters.
Each cell value is average from five different seed For each parameter, and CV is divided by the length of the episode.}
\label{table:ablation}
\vspace{-10pt}
\end{table}

\begin{figure}[t]
\centering
\includegraphics[width=0.3\textwidth]{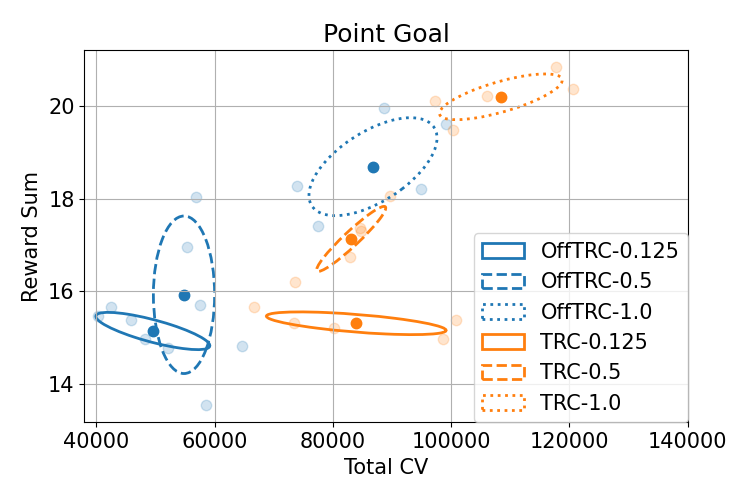}
\caption{Reward sum and total number of CVs on the point goal task with different risk levels $\alpha$ of CVaR.
The risk level for each run is indicated in the suffix in the legend.
}
\label{fig:pareto}
\vspace{-10pt}
\end{figure}

\section{CONCLUSIONS}

In this paper, a sample efficient off-policy safe RL algorithm using trust region CVaR, called off-policy TRC, is presented. 
We have proposed novel surrogate functions such that the CVaR constraint can be estimated using off-policy data without the distributional shift, as shown in Section \ref{sec:effectiveness of surrogate functions}.
In simulation and real-world experiments, the proposed off-policy TRC has achieved the highest returns with the lowest number of constraint violations in all tasks, showing its high sample efficiency.

\appendix

\small

In this section, we assume that the state space $\mathcal{S}$ and action space $\mathcal{A}$ are finite spaces.
Thus, $d^{\pi}$, $d_2^{\pi}$, and $\rho$ are treated as vectors in $\mathbb{R}^{|\mathcal{S}|}$.

\subsection{Preliminary}

\begin{lemma}
\label{lemma:square upper bound}
Given stochastic policies $\pi$, $\pi'$ and a behavior policy $\mu$, the following inequality holds: 
\begin{equation}
\small
\begin{aligned}
\left|J_{S}(\pi') - J_{S}^{\mu, \pi}(\pi')\right| \leq\frac{2\epsilon_S \gamma^2}{(1 - \gamma^2)^2}D(\mu, \pi')D(\pi, \pi'),
\end{aligned}
\end{equation}
where $\epsilon_{S} := \underset{s,a}{\mathrm{max}}\left|A_{S}^{\pi}(s,a)\right|$.
\end{lemma}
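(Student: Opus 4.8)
The plan is to mirror the off-policy TRPO argument behind the lower bound (\ref{eq:lower bound}), but carried out in the \emph{doubly discounted} MDP whose visitation distribution is $d_2^{\pi}$ and whose effective discount is $\gamma^2$, so that the roles played by $A^{\pi}$, $\epsilon_R$, $\gamma$, and $d^{\pi}$ are taken over by $A_S^{\pi}$, $\epsilon_S$, $\gamma^2$, and $d_2^{\pi}$. First I would rewrite the surrogate (\ref{eq:surrogates}) using the action-level importance-sampling identity $\underset{a\sim\mu}{\mathbb{E}}[\tfrac{\pi'(a|s)}{\mu(a|s)}A_S^{\pi}(s,a)]=\underset{a\sim\pi'}{\mathbb{E}}[A_S^{\pi}(s,a)]$, giving $J_S^{\mu,\pi}(\pi')=J_S(\pi)+\tfrac{1}{1-\gamma^2}\,\mathbb{E}_{s\sim d_2^{\mu},\,a\sim\pi'}[A_S^{\pi}(s,a)]$. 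Then I would establish the doubly discounted performance-difference identity
\[
J_S(\pi') = J_S(\pi) + \frac{1}{1-\gamma^2}\underset{s\sim d_2^{\pi'},\,a\sim\pi'}{\mathbb{E}}\left[A_S^{\pi}(s,a)\right]
\]
by unrolling the $\gamma^2$-Bellman recursion for $S_C^{\pi}$ implied by (\ref{eq:J_C and J_S}) and telescoping. Subtracting the two expressions isolates the gap as a reweighting by $d_2^{\pi'}-d_2^{\mu}$:
\[
J_S(\pi')-J_S^{\mu,\pi}(\pi') = \frac{1}{1-\gamma^2}\sum_s\left(d_2^{\pi'}(s)-d_2^{\mu}(s)\right)\bar A_S(s),
\]
where $\bar A_S(s):=\underset{a\sim\pi'}{\mathbb{E}}[A_S^{\pi}(s,a)]$.

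The claim then follows from two independent bounds whose product furnishes the two total-variation factors. For the per-state term I would use $\underset{a\sim\pi}{\mathbb{E}}[A_S^{\pi}(s,a)]=0$ to write $\bar A_S(s)=\sum_a(\pi'(a|s)-\pi(a|s))A_S^{\pi}(s,a)$, so that a Hölder pairing bounds $|\bar A_S(s)|$ by $\epsilon_S\,\|\pi'(\cdot|s)-\pi(\cdot|s)\|_1$, i.e. by a constant multiple of $\epsilon_S D(\pi,\pi')$. For the state-distribution term I would control $\|d_2^{\pi'}-d_2^{\mu}\|_1$ by a resolvent estimate: writing $d_2^{\pi}=(1-\gamma^2)(I-\gamma^2 P^{\pi})^{-1}\rho$ for the policy-induced transition operator $P^{\pi}$ and applying the resolvent identity yields
\[
d_2^{\pi'}-d_2^{\mu}=\gamma^2(I-\gamma^2 P^{\pi'})^{-1}(P^{\pi'}-P^{\mu})\,d_2^{\mu},
\]
whence $\|d_2^{\pi'}-d_2^{\mu}\|_1$ is a constant multiple of $\tfrac{\gamma^2}{1-\gamma^2}D(\mu,\pi')$. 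Pulling $\max_s|\bar A_S(s)|$ out of the sum, so that the remaining sum is exactly $\|d_2^{\pi'}-d_2^{\mu}\|_1$, and substituting both estimates collects the factors $\epsilon_S$, $\gamma^2/(1-\gamma^2)^2$, $D(\mu,\pi')$, and $D(\pi,\pi')$ into a bound of the stated form; the equality at $\pi=\pi'$ is immediate, since then $\bar A_S\equiv 0$.

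The main obstacle I anticipate is the performance-difference identity itself, not the two norm estimates, which are routine adaptations of the single-discount case. The subtlety is that the $\gamma^2$-Bellman recursion for the cost-square value carries the cross term $2\gamma C(s,a,s')V_C(s')$, so $S_C^{\pi'}$ is governed by $V_C^{\pi'}$ whereas $A_S^{\pi}$ is built from $V_C^{\pi}$; the telescoping is exact only when the cost value $V_C$ is held fixed, which is precisely the regime of a single policy update in Algorithm \ref{algo:proposed algorithm}. I would therefore need to argue this fixed-$V_C$ reduction carefully so that no residual term involving $\epsilon_C$ leaks into an estimate that is meant to depend on $\epsilon_S$ alone. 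Verifying the summability that makes the boundary term of the telescoping vanish, and tracking the $(1-\gamma^2)$ normalizations of $d_2^{\pi}$ and $d_2^{\mu}$ consistently, is the second place where care is required.
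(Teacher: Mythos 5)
Your proposal follows essentially the same route as the paper's own proof. The paper likewise isolates the gap as $\frac{1}{1-\gamma^2}\langle d_2^{\pi'}-d_2^{\mu},\,\bar A_S\rangle$ with $\bar A_S(s)=\mathbb{E}_{a\sim\pi'}[A_S^{\pi}(s,a)]$, uses $\mathbb{E}_{a\sim\pi}[A_S^{\pi}(s,a)]=0$ to replace $\bar A_S(s)$ by $\sum_a(\pi'(a|s)-\pi(a|s))A_S^{\pi}(s,a)$, and concludes with exactly your H\"{o}lder pairing of $\|d_2^{\pi'}-d_2^{\mu}\|_1$ against $\max_s|\bar A_S(s)|$. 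The only presentational difference is that the paper obtains the underlying representation of $J_S$ by citing Corollary 1 of the TRC paper (an identity valid for an arbitrary baseline function $f$, specialized to $f=S_C^{\pi}$), whereas you propose to derive it by telescoping and to control $\|d_2^{\pi'}-d_2^{\mu}\|_1$ by a resolvent estimate; those two ingredients of yours are correct and routine.

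However, the obstacle you flag in your closing paragraph is a genuine, unresolved gap in your argument, and it is precisely the point at which the paper's own proof is also unsound, so you should not expect your ``fixed-$V_C$ reduction'' to go through. Writing $C_t=C(s_t,a_t,s_{t+1})$, the exact expansion is $J_S(\pi')=\mathbb{E}_{\tau\sim\pi'}\bigl[\sum_t\gamma^{2t}\bigl(C_t^2+2\gamma C_t V_C^{\pi'}(s_{t+1})\bigr)\bigr]$, while telescoping $A_S^{\pi}$ along $\pi'$-trajectories produces the same expression with $V_C^{\pi}$ in place of $V_C^{\pi'}$, so your performance-difference identity fails by the residual
\begin{equation*}
\small
\begin{aligned}
&J_S(\pi')-J_S(\pi)-\tfrac{1}{1-\gamma^2}\underset{d_2^{\pi'}\!,\,\pi'}{\mathbb{E}}\left[A_S^{\pi}(s,a)\right] \\
&\quad=2\gamma\,\underset{\tau\sim\pi'}{\mathbb{E}}\left[\textstyle\sum_{t=0}^{\infty}\gamma^{2t}C_t\left(V_C^{\pi'}(s_{t+1})-V_C^{\pi}(s_{t+1})\right)\right].
\end{aligned}
\end{equation*}
This residual is first order in $D(\pi,\pi')$ and completely independent of $\mu$: taking $\mu=\pi'$ with $\pi\neq\pi'$ makes the right-hand side of the lemma vanish while the residual generically does not, so no refinement of the telescoping can fold it into a bound of the stated product form; it instead contributes a term of order $\gamma\,\epsilon_C\,C_{\max}\,D(\pi,\pi')/\bigl((1-\gamma)(1-\gamma^2)\bigr)$, where $C_{\max}=\max_{s,a,s'}C(s,a,s')$. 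The paper conceals exactly this issue in the symbol $\delta_f^{\pi'}$: for the step $J_S^{\mu,\pi}(\pi')=J_S(\pi)+\frac{1}{1-\gamma^2}\langle d_2^{\mu},\delta_f^{\pi'}\rangle$ to match the surrogate (\ref{eq:surrogates}), $\delta_f^{\pi'}$ must carry $V_C^{\pi}$, but for the identity $J_S(\pi')=\langle\rho,f\rangle+\frac{1}{1-\gamma^2}\langle d_2^{\pi'},\delta_f^{\pi'}\rangle$ to hold it must carry $V_C^{\pi'}$; the discrepancy between the two readings is your cross term. In short, your proposal is exactly as complete as the published proof, and your caveat pinpoints the step that neither argument justifies. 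Two minor points: your equality claim at $\pi=\pi'$ is fine (both the reweighting term and the residual vanish there), but your two norm estimates multiply to the constant $4\epsilon_S\gamma^2/(1-\gamma^2)^2$ rather than the stated $2\epsilon_S\gamma^2/(1-\gamma^2)^2$, a factor-of-two looseness that the paper's final inequality shares.
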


\begin{proof}
For any function $f$, let define the following variable:
\begin{equation*}
\small
\begin{aligned}
&\delta_f^{\pi}(s)\!=\!\!\underset{\pi, \mathcal{P}}{\mathbb{E}}\left[C(s, a, s')^2\! + \gamma^2f(s') - f(s) + 2\gamma C(s,a,s')V_C^{\pi}(s')\right]. \\
\end{aligned}
\end{equation*}
Then, the following equation holds by Corollary 1 in \cite{kim2022trc}:
\begin{equation*}
\small
\begin{aligned}
J_{S}(\pi) &= \langle \rho, f \rangle + \frac{1}{1 - \gamma^2}\langle d_2^{\pi}, \delta_{f}^{\pi}\rangle.
\end{aligned}
\end{equation*}
By substituting $f$ with $S_C^{\pi}$,
\begin{equation*}
\small
\begin{aligned}
J_{S}^{\mu, \pi}(\pi') &= J_S(\pi) + \frac{1}{1 - \gamma^2}\langle d_2^{\mu}, \delta_{f}^{\pi'}\rangle \\
&= \langle \rho, f\rangle + \frac{1}{1 - \gamma^2}\langle d_2^{\pi}, \delta_{f}^{\pi}\rangle + \frac{1}{1 - \gamma^2}\langle d_2^{\mu}, \delta_{f}^{\pi'}\rangle. \\
\end{aligned}
\end{equation*}
\begin{equation*}
\small
\begin{aligned}
\Rightarrow\left|(1 - \gamma^2)(J_{S}(\pi') - J_{S}^{\mu, \pi}(\pi'))\right| &= \left|\langle d_{2}^{\pi'} - d_{2}^{\mu}, \delta_{f}^{\pi'} \rangle\right| \\
&= \left|\langle d_{2}^{\pi'} - d_{2}^{\mu}, \delta_{f}^{\pi'} - \delta_{f}^{\pi} \rangle\right| \\
&\leq \frac{2\epsilon_{S}\gamma^2}{1 - \gamma^2}D(\mu, \pi')D(\pi, \pi'). \\
\end{aligned}
\end{equation*}
\end{proof}

\subsection{Proof of Theorem \ref{theorem:upper bound}}
\label{appendix:upper bound}
\begin{proof}
By Theorem 1 in \cite{meng2021offtrpo}, the following inequality holds:
\begin{equation}
\small
\label{eq: cost inequality}
\left|J_C(\pi') - J_{C}^{\mu, \pi}(\pi')\right| \leq \frac{4\epsilon_C\gamma}{(1-\gamma)^2}D(\mu, \pi')D(\pi, \pi').
\end{equation}
As the range of $C$ is in $\mathbb{R}_{\geq0}$,
\begin{equation}
\small
\label{eq:cost surrogate inequality}
\begin{aligned}
&J_{C}^{\mu, \pi}(\pi')^2 \leq (J_C(\pi') + \frac{4\epsilon_C\gamma}{(1-\gamma)^2} D(\mu, \pi')D(\pi, \pi'))^2. \\
\Rightarrow& J_{C}^{\mu, \pi}(\pi')^2 - J_C(\pi')^2 \leq (\frac{4\epsilon_C\gamma}{(1-\gamma)^2} D(\mu, \pi')D(\pi, \pi'))^2 \\
&\qquad\qquad\qquad\quad + \frac{8\epsilon_C\gamma}{(1-\gamma)^2}D(\mu, \pi')D(\pi, \pi') J_C(\pi'). \\
\end{aligned}
\end{equation}
Using (\ref{eq:cost surrogate inequality}) and Lemma \ref{lemma:square upper bound}, the following inequality holds:
\begin{equation}
\small
\label{eq:std inequality}
\begin{aligned}
&\sqrt{J_S(\pi') - J_C(\pi')^2} - \sqrt{J_S^{\mu, \pi}(\pi') - J_C^{\mu, \pi}(\pi')^2} \\
& = \frac{J_S(\pi') - J_C(\pi')^2 - (J_S^{\mu, \pi}(\pi') - J_C^{\mu, \pi}(\pi')^2)}{\sqrt{J_S(\pi') - J_C(\pi')^2} + \sqrt{J_S^{\mu, \pi}(\pi') - J_C^{\mu, \pi}(\pi')^2}} \\
&\leq \frac{(J_S(\pi') - J_S^{\mu, \pi}(\pi')) + (J_C^{\mu, \pi}(\pi')^2 - J_C(\pi')^2)}{\mathrm{CVaR}_{\alpha}(C_{\pi'})} \\
& \leq \left(\left(\frac{4\epsilon_C\gamma}{(1-\gamma^2)}\right)^2D(\mu, \pi')D(\pi, \pi') +\frac{2\epsilon_S\gamma^2}{(1-\gamma^2)^2} \right. \\
&\left.+ \frac{8\epsilon_C\gamma}{(1-\gamma)^2}J_{C}(\pi')\right)D(\mu, \pi')D(\pi, \pi')/{\mathrm{CVaR}_{\alpha}(\pi')} \\
&= \epsilon_{\mathrm{CVaR}}D(\mu, \pi')D(\pi, \pi').
\end{aligned}
\end{equation}
Then, using (\ref{eq: cost inequality}) and (\ref{eq:std inequality}),
\begin{equation*}
\small
\begin{aligned}
&\mathrm{CVaR}_{\alpha}(C_{\pi'}) - \overline{\mathrm{CVaR}}_{\alpha}(C_{\pi'}) \\
&= J_C(\pi') - J_C^{\mu, \pi}(\pi') + \frac{\phi(\Phi^{-1}(\alpha))}{\alpha}\left(\sqrt{J_S(\pi') - J_C(\pi')^2} \right. \\
&\quad\left. - \sqrt{J_S^{\mu, \pi}(\pi') - J_C^{\mu, \pi}(\pi')^2}\right) \\
& \leq \left(\frac{4\epsilon_C\gamma}{(1 - \gamma)^2} + \epsilon_{\mathrm{CVaR}}\frac{\phi(\Phi^{-1}(\alpha))}{\alpha}\right)D(\mu, \pi')D(\pi, \pi').
\end{aligned}
\end{equation*}
\end{proof}

\subsection{Trust Region Constraint}
\label{appendix:trust region}
By Pinsker's inequality, the following inequality holds:
\begin{equation*}
\small
\begin{aligned}
D(\pi, \pi') \leq \sqrt{\underset{s}{\mathrm{max}}D_{\mathrm{KL}}(\pi(\cdot|s)||\pi'(\cdot|s))/2}.
\end{aligned}
\end{equation*}
From Appendix B in \cite{meng2021offtrpo}, the following inequality also holds:
\begin{equation*}
\small
\begin{aligned}
D(\mu, \pi') &\leq D(\mu, \pi) + D(\pi, \pi').
\end{aligned}
\end{equation*}
Then, the trust region constraint can be expressed as:
\begin{equation}
\small
\label{eq:trust region inequality}
\begin{aligned}
&D(\mu, \pi')D(\pi, \pi') \leq \underset{s}{\mathrm{max}}D_{\mathrm{KL}}(\pi(\cdot|s)||\pi'(\cdot|s)) \\
&\!\!+ \!\!\sqrt{\underset{s}{\mathrm{max}}D_{\mathrm{KL}}(\mu(\cdot|s)||\pi(\cdot|s))\underset{s}{\mathrm{max}}D_{\mathrm{KL}}(\pi(\cdot|s)||\pi'(\cdot|s))} \leq \delta.
\end{aligned}
\end{equation}
As the maximum operation is difficult to implement in continuous state space settings, we replace it with expectation on trajectories sampled by $\mu$ as follows:
\begin{equation}
\small
\label{eq:trust region ineq}
\begin{aligned}
&D_{\mathrm{KL}}(\pi||\pi') + \sqrt{D_{\mathrm{KL}}(\mu||\pi)D_{\mathrm{KL}}(\pi||\pi')} \leq \delta.
\end{aligned}
\end{equation}
The square root operation on $\pi'$ can lead to infinite value and increase nonlinearity while computing gradients.
Thus, by rearranging (\ref{eq:trust region ineq}), we remove the square root on $\pi'$.
\begin{equation}
\small
\begin{aligned}
&D_{\mathrm{KL}}(\pi||\pi') \leq \delta + D_{\mathrm{KL}}(\mu||\pi)/2 \\
&- \sqrt{D_{\mathrm{KL}}(\mu||\pi)\left(\delta + D_{\mathrm{KL}}(\mu||\pi)/4\right)} = \delta - \delta_{\mathrm{old}}.
\end{aligned}
\end{equation}

\addtolength{\textheight}{-12cm}  
\balance

\bibliographystyle{IEEEtran}
\bibliography{main}

\end{document}